\title[]{Recovery Guarantees of Unsupervised Neural Networks for Inverse Problems trained with Gradient Descent}
\author*[]{\fnm{Nathan} \sur{Buskulic}}\email{nathan.buskulic@unicaen.fr}
\author[]{\fnm{Jalal} \sur{Fadili}}\email{Jalal.Fadili@ensicaen.fr}
\author[]{\fnm{Yvain} \sur{Qu\'eau}}\email{yvain.queau@ensicaen.fr}
\affil[]{\orgdiv{Greyc}, \orgname{Normandie Univ., UNICAEN, ENSICAEN, CNRS}, \orgaddress{\street{6 Boulevard Maréchal Juin}, \city{Caen}, \postcode{14000}, \country{France}}}
\begin{document}

\abstract{
Advanced machine learning methods, and more prominently neural networks, have become  standard to solve inverse problems over the last years. However, the theoretical recovery guarantees of such methods are still scarce and difficult to achieve. Only recently did unsupervised methods such as Deep Image Prior (DIP) get equipped with convergence and recovery guarantees for generic loss functions when trained through gradient flow with an appropriate initialization. In this paper, we extend these results by proving that these guarantees hold true when using gradient descent with an appropriately chosen step-size/learning rate. We also show that the discretization only affects the overparametrization bound for a two-layer DIP network by a constant and thus that the different guarantees found for the gradient flow will hold for gradient descent.
}

\keywords{Inverse problems, Deep Image/Inverse Prior, Overparametrization, Gradient descent, Unsupervised learning}

\maketitle

\section{Introduction}
{\noindent \bf Problem statement.~}
In finite dimension, inverse problems are understood as the task of reliably recovering a vector $\xvc$ in a finite-dimensional vector space (throughout $\R^n$) from its indirect and noisy measurements
\begin{align}\label{eq:prob_inv}
    \yv = \fop\xvc + \veps,
\end{align}
where $\yv\in\R^m$ is the observation vector, $\fop: \R^n \to \R^m$ is a forward operator that we assume linear and $\veps$ is an additive noise. We will use the shorthand notation $\yvc \eqdef \fop\xvc$ to denote the noiseless observation vector.

Due to the large number of scientific and engineering fields where inverse problems appear, it is natural that in recent years sophisticated machine learning algorithms, including those based on (deep) neural networks, were developed to solve them. These methods have shown promising results; for space limitation, we refer to the reviews \cite{arridge_solving_2019,ongie_deep_2020}. Many of these approaches are based on the idea of optimizing a generator network $\mathbf{g}: (\uv,\thetav) \in \R^d\times \R^p \mapsto \xv \in \R^n$, equipped with an activation function $\phi$, to transform an input (latent) variable $\uv\in\R^d$ into a vector $\xv$ as close as possible to the sought-after vector. The optimization/training takes the form of a (possibly stochastic) gradient descent on the parameters $\thetav$ of the network to minimize a loss function $\lossy:\funspacedef{\R^m}{\R_+}, \vv \mapsto \lossy(\vv)$ intended to capture the forward model \eqref{eq:prob_inv} by measuring the discrepancy between the observation $\yv$ and an estimated observation $\vv=\fop\gv(\uv,\thetav)$ generated by the network with parameters/weights $\thetav$.

{\noindent \bf Literature overview.~}
Several theoretical works emerged recently to study the optimization trajectory of overparametrized networks~\cite{bartlett_deep_2021,fang_mathematical_2021}. While the first attempts used unrealistic assumptions, such as the strong convexity of the loss when composed with the network, the next attempts were based on gradient dominated inequalities. This allowed to prove, for networks trained to minimize the mean square error (MSE), an exponential convergence rate to an optimal solution of either gradient flow~\cite{chizat_lazy_2019,robin2022convergence}, or gradient descent~\cite{du_gradient_2019,arora_fine-grained_2019, oymak_overparameterized_2019, oymak_toward_2020,liu2022loss}). These results were generalized in~\cite{buskulic2023convergenceJournal} to the inverse problem setting, providing both convergence and recovery guarantees. Furthermore, the latter work alleviates the restriction of using the MSE as the loss function and gives results for generic loss functions obeying the Kurdyka-\L ojasiewicz inequality (e.g., any semi-algebraic function or even definable on an o-minimal structure)~\cite{loj1,loj3,kurdyka_gradients_1998}. However, these guarantees were only provided for the gradient flow and it is yet unknown how forward Euler discretization of this follows, giving rise to to gradient descent, affects them.


{\noindent \bf Contributions.~}
Gradient descent iteration to learn the network parameters reads
\begin{align}
\thetavtauplus = \thetavtau - \gamma \nabla_{\thetav}\lossy(\fop\gv(\uv,\thetavtau)) ,
\label{eq:grad_descent}
\end{align}
with $\gamma$ the (fixed) descent-size/learning rate. The goal of this work is to analyze under which conditions the iterates \eqref{eq:grad_descent} converge, whether they converge to a zero-loss solution, and what can be said about the recovery guarantees of $\xvc$. 

In Section~\ref{sec:guarantees}, we first show that neural networks trained through gradient descent \eqref{eq:grad_descent} can benefit from convergence and recovery guarantees for general loss functions verifying the Kurdyka-\L ojasiewicz (KL) property. That is, we prove that under a proper initialization and a well-chosen step-size $\gamma$, the network will converge to a zero-loss solution at a rate dependent on the desingularizing function of the KL property of the loss. We also provide a bound on the recovery error of the original vector $\xvc$ which requires a restricted injectivity condition to hold, and we emphasize the trade-off between this condition and the expressivity of the trained network. Then, we give a bound on the overparametrization necessary for a two-layer DIP~\cite{ulyanov_deep_2020} network to benefit from all these guarantees with high probability. Our results match those of the gradient flow up to discretization errors that depend on the step-size. Section~\ref{sec:expes} eventually provides numerical experiments validating our theoretical findings.
\section{Preliminaries}\label{sec:prelim}
\subsection{General Notations}
For a matrix $\Mv \in \R^{a \times b}$ we denote by $\sigmin(\Mv)$ and $\sigmax(\Mv)$ its smallest and largest non-zero singular values, and by $\kappa(\Mv) = \frac{\sigmax(\Mv)}{\sigmin(\Mv)}$ its condition number. We use as a shorthand $\sigminA$ to express $\sigmin(\fop)$.  We denote by $\norm{\cdot}$ the Euclidian norm of a vector. With a slight abuse of notation $\norm{\cdot}$ will also denote the spectral norm of a matrix. We use the notation $a \gtrsim b$ if there exists a constant $C > 0$ such that $a \geq C b$.
We also define $\xvtau = \gdiptau$ and recall that $\yvtau = \fop\xvtau$. The Jacobian operator of $\gv(\uv,\cdot)$ is denoted $\Jg$. $\Jgtau$ is a shorthand  notation of $\Jg$ evaluated at $\thetavtau$. The local Lipschitz constant of a mapping on a ball of radius $R > 0$ around a point $\zv$ is denoted $\Lip_{\Ball(\zv,R)}(\cdot)$. We omit $R$ in the notation when the Lipschitz constant is global. For a function $f: \R^n \to \R$, we use the notation for the sublevel set $[f < c] = \enscond{\zv \in \R^n}{f(\zv) < c}$ and $[c_1 < f < c_2] = \enscond{\zv \in \R^n}{ c_1 < f(\zv) < c_2}$. We set $\Cphi = \sqrt{\Expect{X\sim\stddistrib}{\phi(X)^2}}$ and $\Cphid = \sqrt{\Expect{X\sim\stddistrib}{\phi'(X)^2}}$.


For some $\Theta \subset \R^p$, we define $\Sigma_\Theta = \enscond{\gv(\uv,\thetav)}{\thetav \in \Theta}$ the set of vectors that the network $\gv(\uv,\cdot)$ can generate for all $\thetav$ in the set of parameters $\Theta$. $\Sigma_\Theta$ can thus be viewed as a parametric manifold. If $\Theta$ is closed (resp. compact), so is $\Sigma_\Theta$. We denote $\dist(\cdot,\Sigma_\Theta)$ the distance to $\Sigma_\Theta$ which is well defined if $\Theta$ is closed and non-empty. For a vector $\xv$, $\xvsigmatheta$ is its projection on $\Sigma_\Theta$, i.e. $\xvsigmatheta \in \Argmin_{\zv \in \Sigma_\Theta} \norm{\xv-\zv}$. We also define $T_{\Sigma_\Theta}(\xv)$ the tangent cone of $\Sigma_\Theta$ at $\xv\in\Sigma_\Theta$. The minimal (conic) singular value of a matrix $\fop \in \R^{m \times n}$ w.r.t. the cone $T_{\Sigma_\Theta}(\xv)$ is then defined as
\[
\lmin(\fop;T_{\Sigma_\Theta}(\xv)) = \inf
\{\norm{\fop \zv}/\norm{\zv}:  \zv \in T_{\Sigma_\Theta}(\xv)\}.
\]
Throughout, $\gdip$ is a feedforward neural network.
\begin{definition}\label{def:nn}
Let $\phi : \R \to \R$ be a component wise activation function. An $L$-layer fully connected neural network is a collection of weight matrices $\pa{\Wv^{(l)}}_{l \in [L]}$ where $\Wv^{(l)} \in \R^{N_l\times N_{l-1}}$ with $N_l \in \N$ the number of neurons on layer $l$. We take $\thetav \in \R^p$, with $p = \sum_{l=1}^L N_{l-1}N_l$, a vector that gathers all these parameters. Then, a neural network parametrized by $\thetav$ is the mapping $\gv: (\uv,\thetav) \in \R^d \times \R^p \mapsto \gv(\uv,\thetav) \in \R^{N_L}$ with $N_L =n$, which is defined recursively as
\begin{align*}
		\begin{cases}
			\gv^{(0)}(\uv,\thetav)&= \uv,
			\\
			\gv^{(l)}(\uv,\thetav)&=\phi\pa{\Wv^{(l)}\gv^{(l-1)}(\uv,\thetav)}, \quad  l=1,\ldots , L-1,
			\\
			\gv(\uv,\thetav)&= \Wv^{(L)} \gv^{(L-1)}(\uv,\thetav).
		\end{cases}
	\end{align*}

\end{definition}
\subsection{KL inequality}
We will work under a general condition of the loss function $\loss$ which includes non-convex ones. We will suppose that $\loss$ verifies a Kurdyka-\L ojasewicz-type (KL for short) inequality~\cite[Theorem~1]{kurdyka_gradients_1998}. 
\begin{definition}[KL inequality]\label{def:KL}
A continuously differentiable function $f:\funspacedef{\R^n}{\R}$ with $\min f=0$ satisfies the KL inequality if there exists $r_0 > 0$ and a strictly increasing function $\psi \in \cC^0([0,r_0[) \cap \cC^1(]0,r_0[)$ with $\psi(0) = 0$ such that
\begin{align}\label{eq:KLpsi}
\psi'(f(\zv))\norm{\nabla f(\zv)} \geq 1, \qforallq \zv \in [f < r_0] . 
\end{align}
We use the shorthand notation $f \in \KLpsi(r_0)$ for a function satisfying this inequality.
\end{definition}

\section{Recovery Guarantees with Gradient Descent} \label{sec:guarantees}

\subsection{Main Assumptions} 
Throughout this paper, we will work under the following standing assumptions:
\vspace*{0.5em}

\begin{mdframed}
    \begin{assumption}\label{ass:l_smooth}
        $\lossy(\cdot) \in \cC^1(\R^m)$ is bounded from below whose gradient is Lipschitz continuous on the bounded sets of~$\R^m$. 
    \end{assumption}
    \begin{assumption}\label{ass:l_kl}
        $\lossy(\cdot) \in\KLpsi(\lossy(\yvz)+\eta)$ for some $\eta > 0$. 
    \end{assumption}
    \begin{assumption}\label{ass:min_l_zero}
        $\min \lossy(\cdot) = 0$.
    \end{assumption}
    \begin{assumption}\label{ass:phi_diff}
        $\phi \in \cC^1(\R)$ and $\exists B > 0$ such that $\sup_{x \in \R}|\phi'(x)| \leq B$ and $\phi'$ is $B$-Lipschitz continuous.
    \end{assumption}
    \begin{assumption}\label{ass:theta_bounded}
        $\thetavseqtau$ is bounded.
    \end{assumption}
    
\end{mdframed}


Notably, these assumptions are not restrictive. \ref{ass:l_smooth} and \ref{ass:phi_diff} ensure that $\nabla_{\thetav}\lossy(\fop\gv(\uv,\cdot))$ is locally Lipschitz w.r.t $\thetav$ and \ref{ass:l_kl} is met by many classical loss functions (MSE, Kullback-Leibler and cross entropy to cite a few). \ref{ass:theta_bounded} is quite mild as we do not require neither convexity nor coercivity of the objective or that $\nabla_{\thetav}\lossy(\fop\gv(\uv,\cdot))$  is globally Lipschitz continuous. Indeed, the latter is widely assumed to ensure that the scheme~\eqref{eq:grad_descent} has the descent property, but this is unrealistic when training neural networks. Our assumption \ref{ass:theta_bounded}, together with \ref{ass:l_smooth} and \ref{ass:phi_diff}, ensure the existence of a constant $L > 0$ such that $\nabla_{\thetav}\lossy(\fop\gv(\uv,\cdot))$ is $L$-Lipschitz continuous w.r.t $\thetav$ on the ball containing $\thetavseqtau$. This avoids an ``egg and chicken'' issue. Indeed, we can show that $\thetavseqtau$ is bounded if \eqref{eq:grad_descent} has a global descent property, but this requires global Lipschitz continuity of $\nabla_{\thetav}\lossy(\fop\gv(\uv,\cdot))$. \ref{ass:theta_bounded} resolves that issue.

\medskip

\subsection{Deterministic Results}




We can now state our recovery theorem for gradient descent. For obvious space limitation, the proof can be found in Appendix~\ref{appendix:proof_main_thm}.
\begin{theorem}\label{thm:main_discr}
Consider a network $\gv(\uv,\cdot)$, a forward operator $\fop$ and a loss $\lossy$ such that our assumptions hold. 
    Let $\thetavseqtau$ be the sequence generated by \eqref{eq:grad_descent}. There exists a constant $L > 0$ such that if $\gamma \in ]0,1/L]$ and if the initialization $\thetavz$ is such that
    \begin{equation}\label{eq:bndR2}
        \sigminjgz > 0 \qandq R' < R
    \end{equation}
    where R' and R obey
    \begin{equation}\label{eq:RandR'2}
        R' = \frac{2\nu_1\psi(\lossyzy)}{\sigminA\sigminjgz} \qandq R = \frac{\sigminjgz}{2\Lip_{\Ball(\thetavz,R)}(\Jg)},
    \end{equation}
    with $\nu_1=\frac{1+\gamma L}{1-\gamma L / 2} \in ]1,4]$, then the following holds:
    \begin{enumerate}[label=(\roman*)]
\item \label{thm:main_y_bounded2} the loss converges to $0$ at the rate
\begin{align}\label{eq:lossrate2}
    \lossytauy &\leq \Psi\inv\left(\xi_{\tau}\right)
\end{align}
with $\Psi$ a primitive of $-(\psi')^2$ and $\xi_{\tau} = \frac{\sigminA^2\sigminjgz^2}{4\nu_2}\tau + \Psi(\lossyzy)$ where $\nu_2 = \frac{(1+\gamma L)^2}{(\gamma - \gamma^2L/2)}$. Moreover, $\thetavseqtau$ converges to a global minimizer $\thetav_{\infty}$ of $\lossy(\fop\gv(\uv,\cdot))$, at the rate
\begin{align}\label{eq:thetarate2}
\norm{\thetavtau - \thetav_{\infty}} &\leq \frac{2\nu_1\psi\left( \Psi\inv(\xi_{\tau}) \right)}{\sigminjgz\sigminA};
\end{align}


\item if $\Argmin(\lossy(\cdot))=\{\yv\}$, $\lossy$ is convex and
\begin{assumption}\label{ass:A_inj}
$\ker{(\fop)} \cap T_{\Sigma'}(\xvc_{\Sigma'}) = \{0\}$ with $\Sigma'\eqdef \Sigma_{\Ball_{R' + \norm{\thetavz}}}$
\end{assumption}
Then,
\begin{align}\label{eq:xrate_discr}
\norm{\xvtau - \xvc} &\leq \frac{\psi\pa{\Psi\inv\pa{\xi_{\tau}}}}{\lmin(\fop;T_{\Sigma'}(\xvcsigma))} \nonumber\\
&+ \pa{1+\frac{\norm{\fop}}{\lmin(\fop;T_{\Sigma'}(\xvcsigma))}}\dist(\xvc,\Sigma') \nonumber\\
&+ \frac{\norm{\veps}}{\lmin(\fop;T_{\Sigma'}(\xvcsigma))}.
\end{align}

\end{enumerate}
\end{theorem}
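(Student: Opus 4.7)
The plan is to upgrade the gradient-flow proof of~\cite{buskulic2023convergenceJournal} to the discrete scheme \eqref{eq:grad_descent} via a descent-lemma/bootstrap argument, with $\nu_1,\nu_2$ encoding the (vanishing as $\gamma L\to 0$) discretization overhead. Write $h(\thetav) \eqdef \lossy(\fop\gv(\uv,\thetav))$, so that $\nabla h(\thetav) = \Jg(\thetav)^{\top}\fop^{\top}\nabla\lossy(\fop\gv(\uv,\thetav))$. Assumptions~\ref{ass:theta_bounded}, \ref{ass:l_smooth} and \ref{ass:phi_diff} give a local Lipschitz constant $L>0$ for $\nabla h$ on a ball containing the iterates, hence for $\gamma\leq 1/L$ the descent lemma yields
\begin{equation*}
h(\thetavtauplus) \leq h(\thetavtau) - \gamma\bigl(1-\tfrac{\gamma L}{2}\bigr)\norm{\nabla h(\thetavtau)}^2.
\end{equation*}
Combined with the chain-rule bound $\norm{\nabla h(\thetavtau)} \geq \sigmin(\Jgtau)\sigminA\,\norm{\nabla\lossy(\yvtau)}$, the task reduces to maintaining a uniform lower bound $\sigmin(\Jgtau)\geq \sigminjgz/2$, which by the Lipschitz property of $\Jg$ holds as long as $\thetavtau\in\Ball(\thetavz,R)$.

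I would close this by a joint bootstrap and KL telescoping. On the inductive event that all previous iterates stay in $\Ball(\thetavz,R)$, combine the monotonicity of $\psi$ with the one-step descent inequality and the KL lower bound $\psi'(h)\norm{\nabla h}\geq 1$ to get a telescopable estimate of the form $\psi(h(\thetavtau))-\psi(h(\thetavtauplus)) \gtrsim \sigminA\sigminjgz\,\norm{\thetavtauplus-\thetavtau}$. Summing over $\tau$ gives $\sum_{s<\tau}\norm{\thetav_{s+1}-\thetav_s}\leq R'$, and the standing hypothesis $R'<R$ closes the induction. The sequence $\thetavseqtau$ is then Cauchy, producing $\thetav_\infty$ and the trajectory bound \eqref{eq:thetarate2}, and $h(\thetavtau)\to 0$ together with \ref{ass:min_l_zero} forces $\lossy(\fop\xv_\infty)=0$. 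For the loss rate \eqref{eq:lossrate2}, a second telescoping now applied to $\Psi$ (a strictly decreasing primitive of $-(\psi')^2$) combines the descent inequality with KL to yield $\Psi(h(\thetavtauplus))-\Psi(h(\thetavtau)) \geq \sigminA^2\sigminjgz^2/(4\nu_2)$; summing and inverting $\Psi$ gives exactly the claimed bound.

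For \eqref{eq:xrate_discr}, since $\Argmin\lossy=\{\yv\}$, the identity $\lossy(\fop\xv_\infty)=0$ yields $\fop\xv_\infty=\yv=\fop\xvc+\veps$ and, a fortiori, $\xv_\infty\in\Sigma'$. I would then decompose $\xvtau-\xvc=(\xvtau-\xvcsigma)+(\xvcsigma-\xvc)$, apply the restricted injectivity~\ref{ass:A_inj} via $\lmin(\fop;T_{\Sigma'}(\xvcsigma))$ to the first summand, and bound $\norm{\fop(\xvtau-\xvcsigma)}\leq \norm{\fop\xvtau-\yv}+\norm{\veps}+\norm{\fop}\dist(\xvc,\Sigma')$. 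Convexity of $\lossy$ together with the KL inequality converts $\norm{\fop\xvtau-\yv}$ into $\psi(\lossytauy)$, and substituting \eqref{eq:lossrate2} delivers \eqref{eq:xrate_discr}. The main obstacle is the joint bootstrap/KL-telescoping in discrete time: the $O(\gamma L)$ remainder produced by the descent lemma must be absorbed without destroying the summability of $\norm{\thetavtauplus-\thetavtau}$, and the tight tuning $R'<R$ together with the constants $\nu_1,\nu_2$ is precisely the price for reproducing in discrete time what the chain rule gives for free under gradient flow.
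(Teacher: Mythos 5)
Your proposal follows essentially the same route as the paper's proof: a local descent lemma from \ref{ass:theta_bounded}, \ref{ass:l_smooth}, \ref{ass:phi_diff}, the bootstrap keeping $\sigmin(\Jgtau)\geq\sigminjgz/2$ closed by $R'<R$, a KL/$\psi$-telescoping controlling $\sum_\tau\norm{\thetavtauplus-\thetavtau}$ (the paper phrases this via the Lyapunov sequence $\psi(\lossytauy)+\alpha\sum_i\norm{\thetav_{i+1}-\thetav_i}$), a $\Psi$-telescoping for the rate \eqref{eq:lossrate2}, and the identical decomposition with restricted injectivity and the KL error bound for \eqref{eq:xrate_discr}. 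You correctly locate the only delicate discrete-time step (absorbing the $O(\gamma L)$ remainder, handled in the paper by a mean-value argument on $\psi$ and the local Lipschitz bound, which is exactly where $\nu_1,\nu_2$ arise), so the sketch is sound and matches the paper's argument.
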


\subsection{Discussion and Consequences}

We start by discussing the conditions of the theorem. The first condition $R'<R$ ensures that the loss at initialization is ``small enough'' so that the parameters $\thetavz$ lie in the attraction basin of a minimizer.  The condition $\gamma \in ]0,1/L]$ is classical and ensures that the scheme \eqref{eq:grad_descent} has a descent property. In our context, avoiding big steps guarantees that it will not step out of the attraction basin of the minimizer.
Let us now comment on the different claims of the theorem. The first one ensures that the network converges to a zero-loss solution with a rate dictated by the mapping $\Psi\inv$ (which is decreasing) applied to an affine increasing function of $\tau$. The choice of the loss function is generally dictated by a fidelity argument to the forward model \eqref{eq:prob_inv} (e.g., noise $-$log-likelihood). On the other hand, $\Psi$ only depends on the KL desingularizing function of the chosen loss function. Thus, this choice not only influences the theoretical convergence rate but also the condition \eqref{eq:bndR2} required for the theorem to hold. Our second claim shows that gradient descent provides a sequence of network parameters that converges to a global minimum of the loss. On the signal recovery side, our third result gives us a reconstruction error bound that holds under a restricted injectivity constraint. This bound depends on both the expressivity of the network (via the set $\Sigma'$) and decreases with decreasing noise. Observe that a more expressive network, i.e., larger $\Sigma'$, yields a lower $\dist(\xvc,\Sigma')$ but may hinder the restricted injectivity condition \ref{ass:A_inj}. The recovery bound \eqref{eq:xrate_discr} also suggests to use an early stopping strategy to ensure that $\xv(\tau)$ for $\tau$ large enough (whose expression is left to the reader), will lie in a ball around $\xvc$.

Theorem~\ref{thm:main_discr} shows that gradient descent, which is nothing but an Euler forward time discretization of the gradient flow, inherits the recovery properties of latter proved in~\cite{buskulic2023convergenceJournal}. The price to be paid by gradient descent is in the requirement that the sequence of iterates a priori bounded, that $\gamma$ is well-chosen, as well as by the constants $\nu_1$ and $\nu_2$. As $\nu_1 > 1$ and strictly increasing in $\gamma$, condition $R' < R$ in \eqref{eq:bndR2} is strictly worse that its counterpart for gradient flow, and the error bound \eqref{eq:thetarate2} is also strictly larger. On the other hand, the convergence behaviour gets faster with decreasing $\nu_2$. If $\gamma$ gets close to 0, $\nu_1$ will get smaller but $\nu_2$ will grow very large, entailing a very slow convergence. In fact, $\nu_2$ is minimized when $\gamma=\frac{1}{2L}$ in which case $\nu_2=6L$. As such, this choice of the step-size incurs a trade-off between the convergence speed and the requirements of the theorem.

\subsection{Probabilistic Bounds For a Two-Layer DIP Network}\label{sec:dip}

We now study the case of a two-layer network in the DIP setting. We fix the latent input $\uv$ and learn the weights of a network to match the observation $\yv$. We consider the two-layer network defined as
\begin{align}\label{eq:dipntk}
    \gdip = \frac{1}{\sqrt{k}}\Vv\phi(\Wv\uv)
\end{align}
with  $\Vv \in \R^{n \times k}$ and $\Wv \in \R^{k \times d}$. We will assume the following:


\begin{mdframed}
\begin{assumption}\label{ass:u_sphere}
$\uv$ is a uniform vector on $\sph^{d-1}$; 
\end{assumption}
\begin{assumption}\label{ass:w_init}
$\Wv_0$ has iid entries from $\stddistrib$ and $\Cphi, \Cphid < +\infty$; 
\end{assumption}
\begin{assumption}\label{ass:v_init}
$\Vv_0$ is independent from $\Wv_0$ and $\uv$ and has iid columns with identity covariance and $D$-bounded centered entries.
\end{assumption}
\end{mdframed}

Our main result gives a bound on the level of overparametrization which is sufficient for \eqref{eq:bndR2} to hold with high probability. 

\begin{corollary}\label{cor:dip_two_layers_converge_discr}
Suppose that assumptions \ref{ass:l_smooth} and \ref{ass:phi_diff} hold. Let $C$, $C'$ two positive constants that depend only on the activation function and $D$. Let:
\[
\LLz = \max_{\vv \in \Ball\pa{0,C\norm{\fop}\sqrt{n\log(d)}+\sqrt{m}\pa{\norminf{\fop\xvc}+\norminf{\veps}}}} \frac{\norm{\nabla_\vv \lossy(\vv)}}{\norm{\vv-\yv}}.
\]
Consider the one-hidden layer network \eqref{eq:dipntk} where both layers are trained with gradient descent using initialization satisfying \ref{ass:u_sphere} to \ref{ass:v_init} and the architecture parameters obeying
\begin{align*}
k \geq C' \sigminA^{-4} n \psi\biggl(\frac{\LLz}{2}\Bigl(&C\norm{\fop} \sqrt{n\log(d)} \\
&+ \sqrt{m}\pa{\norminf{\fop\xvc} + \norminf{\veps}}\Bigl)^2\biggl)^4 .
\end{align*}
Then \eqref{eq:bndR2} holds with probability at least $1 - 2n^{-1} - d^{-1}$.
\end{corollary}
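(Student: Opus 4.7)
\medskip
\textbf{Plan of proof.} The goal is to translate the deterministic premise \eqref{eq:bndR2} of Theorem~\ref{thm:main_discr} into probabilistic control of three random quantities at initialization, and then to invert the resulting inequality in $k$. Concretely, I would bound, with high probability, (a) the initial loss $\lossyzy$ from above, (b) the smallest singular value $\sigminjgz$ from below, and (c) the local Lipschitz constant $\Lip_{\Ball(\thetavz,R)}(\Jg)$ from above. A union bound over the three events, of probabilities $1-d^{-1}$, $1-n^{-1}$, and $1-n^{-1}$, then yields the claimed probability $1-2n^{-1}-d^{-1}$.

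For (a), since $\uv \in \sph^{d-1}$ and $\Wv_0$ has iid $\stddistrib$ entries, the preactivations $\Wv_0\uv$ are iid standard normal and $\phi(\Wv_0\uv)$ has iid subgaussian components with second moment $\Cphi^2$. Conditionally on $\Wv_0$, the output $\gdip_0 = \frac{1}{\sqrt{k}}\Vv_0\phi(\Wv_0\uv)$ is a normalized sum of $k$ independent $D$-bounded columns of $\Vv_0$ weighted by $\phi(\Wv_0\uv)$, so a vector Bernstein/Hoeffding inequality gives $\norm{\gdip_0} \lesssim \sqrt{n\log d}$ with probability $1 - d^{-1}$. Combined with the deterministic bound $\norm{\yv} \leq \sqrt{m}(\norminf{\fop\xvc} + \norminf{\veps})$, this places $\fop\gdip_0$ inside the ball on which $\LLz$ is defined, so integrating $\nabla\lossy$ along the segment from $\yv$ to $\fop\gdip_0$ (using $\nabla\lossy(\yv)=0$) delivers
\[
\lossyzy \leq \frac{\LLz}{2}\norm{\fop\gdip_0 - \yv}^2 ,
\]
whose right-hand side is exactly the argument of $\psi$ in the corollary's overparametrization bound.

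For (b), I would split $\Jg_0$ into the $\Vv$-block and the $\Wv$-block. The $\Vv$-block equals $\frac{1}{\sqrt{k}}\phi(\Wv_0\uv)^\top \otimes I_n$ and has $n$ identical singular values $\frac{1}{\sqrt{k}}\norm{\phi(\Wv_0\uv)}$; Hanson-Wright concentration for quadratic forms in Gaussian vectors yields $\frac{1}{\sqrt{k}}\norm{\phi(\Wv_0\uv)} \gtrsim \Cphi$ with probability $1 - n^{-1}$, and since $\Jg_0\Jg_0^\top$ dominates the $\Vv$-block alone in the Loewner order, $\sigminjgz$ inherits this lower bound. For (c), differentiating the two-layer network twice and invoking assumption \ref{ass:phi_diff} reduces the Lipschitz estimate to a pointwise bound in $\norm{\Vv}$ and $\norm{\Wv}$; standard operator-norm concentration for Gaussian $\Wv_0$ and bounded-entry iid $\Vv_0$ gives $\norm{\Wv_0}, \norm{\Vv_0} \lesssim \sqrt{k}+\sqrt{n}$ with probability $1 - n^{-1}$, and the triangle inequality transfers this to the ball $\Ball(\thetavz,R)$. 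The $1/\sqrt{k}$ normalization of $\gdip$ then propagates through to the final estimate.

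Inserting the three bounds into $R' < R$ from \eqref{eq:RandR'2} and isolating $k$ produces the overparametrization requirement stated in the corollary, after absorbing the universal constants and the dependence on $\phi$ and $D$ into $C$ and $C'$. The main obstacle is the implicit definition $R = \sigminjgz/(2\Lip_{\Ball(\thetavz,R)}(\Jg))$: the Lipschitz estimate has to be shown essentially independent of $R$ on a ball large enough to contain its own fixed point, so that the implicit equation admits a solution with universal constants; this is typically handled by first fixing a generous candidate radius, verifying all three probabilistic bounds on it, and closing the argument by a monotonicity check. A secondary subtlety is that the loss bound and the $\sigmin$ bound must share the same realization of $\Wv_0$; this is ensured by a clean conditioning chain $\uv \to \Wv_0 \to \Vv_0$, after which the union bound collapses to $1 - 2n^{-1} - d^{-1}$.
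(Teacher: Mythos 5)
Your overall architecture is the right one and matches the paper's: bound $\lossyzy$ from above, $\sigminjgz$ from below, and $\Lip_{\Ball(\thetavz,R)}(\Jg)$ from above with high probability, take a union bound, and invert $R'<R$ in $k$. (The paper does exactly this, except that it imports the two random-matrix estimates from the gradient-flow paper \cite{buskulic2023convergenceJournal} — which yield $\sigminjgz\gtrsim\sqrt{\Cphi^2+\Cphid^2}$ and $R\geq C_1(k/n)^{1/4}$ for $k\gtrsim n$ — and only reworks the initial-loss bound for a linear $\fop$ and the extra factor $2\nu_1\le 8$ in $R'$.) Your step (a) is essentially the paper's argument via $\LLz$, and your step (b) is a legitimate variant: the Loewner domination $\Jg_0\Jg_0^\top\succeq\frac{\norm{\phi(\Wv_0\uv)}^2}{k}I_n$ is correct and loses only the $\Cphid$ contribution to the constant. (Minor: Hanson--Wright does not apply to $\norm{\phi(\Wv_0\uv)}$ unless $\phi$ is linear; use Gaussian concentration of the $B$-Lipschitz map $\zv\mapsto\norm{\phi(\zv)}$ or Bernstein for iid sub-exponential summands, which needs $k\gtrsim\log n$, implied by the hypothesis.)

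The genuine gap is in step (c), and it is the step that carries all of the $k$-dependence. Since $\sigminjgz$ and $\psi(\lossyzy)$ are $O(1)$ in $k$, the only way the condition $R'<R$ of \eqref{eq:bndR2} can be enforced by overparametrization is if $\Lip_{\Ball(\thetavz,R)}(\Jg)$ \emph{decays} in $k$, so that $R$ in \eqref{eq:RandR'2} grows. With your proposed bounds this does not happen: the dominant term in the Jacobian's Lipschitz constant comes from the $\Wv$-block through $\phi'$ and is of the form $\frac{B}{\sqrt{k}}\norm{\Vv\,\mathrm{diag}\pa{\phi'(\Wv_1\uv)-\phi'(\Wv_2\uv)}}\lesssim\frac{B}{\sqrt{k}}\norm{\Vv}\cdot B\norm{\Wv_1-\Wv_2}_F$, and plugging $\norm{\Vv_0}\lesssim\sqrt{k}+\sqrt{n}$ exactly cancels the $1/\sqrt{k}$ normalization. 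You then get $\Lip_{\Ball(\thetavz,R)}(\Jg)=O(1)$, hence $R=\Theta(1)$ uniformly in $k$, and "isolating $k$" in $R'<R$ is impossible: no bound of the form $k\gtrsim \sigminA^{-4}n\,\psi(\cdot)^4$ follows. The fix is to use the column-wise $D$-boundedness of $\Vv_0$ rather than its operator norm, e.g. $\norm{\Vv\,\mathrm{diag}(\deltav)}\le\max_i\norm{\vv_i}\,\norm{\deltav}\le\pa{D\sqrt{n}+R}\norm{\deltav}$ for $\Vv\in\Ball(\Vv_0,R)$, together with $\norm{\phi'(\Wv_1\uv)-\phi'(\Wv_2\uv)}\le B\norm{\Wv_1-\Wv_2}_F$; this yields $\Lip_{\Ball(\thetavz,R)}(\Jg)\lesssim B\,\frac{\sqrt{n}+R}{\sqrt{k}}$, whence the self-consistent lower bound $R\geq C_1(k/n)^{1/4}$ for $k\gtrsim n$ — precisely the estimate the paper cites — and then raising $R'<R$ to the fourth power gives the stated overparametrization level. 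Your fixed-point remark about the implicit definition of $R$ is well taken and is resolved exactly by this self-consistent estimate, but without the $\sqrt{n/k}$ (rather than $O(1)$) Lipschitz decay the proof does not close.
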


%

The overparametrization bound is strongly dependent on the chosen loss function via the associated desingularizing function $\psi$. As an example, for the MSE loss, we have $\psi(s) = cs^{1/2}$. In that setting, the dependency on the problem variables becomes $k\gtrsim\kappa(\fop)n^2m$ which indicates naturally that the conditioning of the operator plays an important role in the reconstruction capabilities of the network. Furthermore, the dimension of the signal is more impactful on the bound than the dimension of the observation which seems intuitive as the network tries to reconstruct a vector in $\R^n$ while matching a vector in $\R^m$. This bound matches the one from the gradient flow case as the discretization error is absorbed in the constant. A more in-depth discussion can be found in~\cite{buskulic2023convergenceJournal}.

{\noindent \bf Local Lipschitz constant estimate.~}
Estimating the local Lipschitz constant $L$ of $\nabla_{\thetav}\lossy(\fop\gv(\uv,\cdot))$ is not easy in general. One can still derive crude bounds. To give some guidelines, and make the discussion easier, we here focus on the MSE loss and assume that $\phi$ is uniformly bounded (this conforms to our numerics). In this case, we actually have global $L$-Lipschitz continuity $\nabla_{\thetav}\lossy(\fop\gv(\uv,\cdot))$ and one can show that
{\small
\begin{equation}\label{eq:Lbnd}
L \lesssim \norm{\fop}^2\frac{n}{k} + \norm{\fop}\norm{\yv-\yvz} \sqrt{\frac{n}{k}} ,
\end{equation}
}
where the constant in $\lesssim$ depends only on $B$, $D$ and the bound on $\phi$. As the level of overparametrization increases, the bound on $L$ gets smaller and one can afford taking larger step-size $\gamma$. On the other hand, for the overparametrized regime, $L$ depends essentially on $\norm{\fop}$ and the loss at initialization, which may themselves depend on the dimensions $(m,n)$, indicating that the choice of $\gamma$ may be influenced by $(m,n)$, regardless of the choice of $k$ large enough.

\section{Numerical Experiments}\label{sec:expes}

We trained two-layer networks, equipped with the sigmoid activation, with varying architectures and on problems with varying dimensions. In the first experiments we set $n=5$, and sample iid entries of $\fop$ and $\xvc$ from the standard Gaussian distribution. Our first experiment aims at verifying numerically the level of overparametrization needed for condition~\eqref{eq:bndR2} to hold and compare it with the one that yields convergence to a zero loss. In this experiment, we use the MSE loss and only train the hidden layer of the network as discussed in~\cite{buskulic2023convergenceJournal}. From Figure~\ref{fig:r_r'} we see that our theoretical bound in Corollary~\ref{cor:dip_two_layers_converge_discr} is validated, though it is pessimistic as our analysis is a worst-case one. For instance, for $m=2$, our bound is quite conservative as it requires $k\geq 10^{6.8}$ while in practice, convergence occurs when $k\geq10^{2.5}$.

\begin{figure}[htb!]
    \centering
    \begin{subfigure}[t]{.47\textwidth}
        \centering
        \includegraphics[width=1\linewidth]{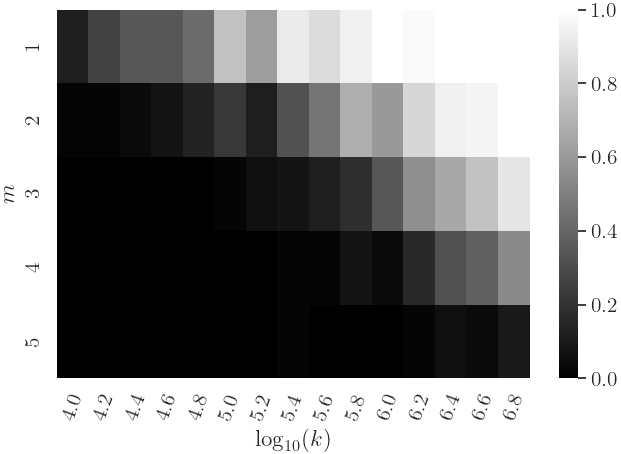}
        \caption{Empirical probability over 50 runs that \eqref{eq:bndR2} is verified as a function of $m$ and $k$.}
        \label{fig:r_r'_th}
    \end{subfigure}%
    \hspace{0.04\textwidth}
    \begin{subfigure}[t]{.47\textwidth}
        \centering
        \includegraphics[width=1\linewidth]{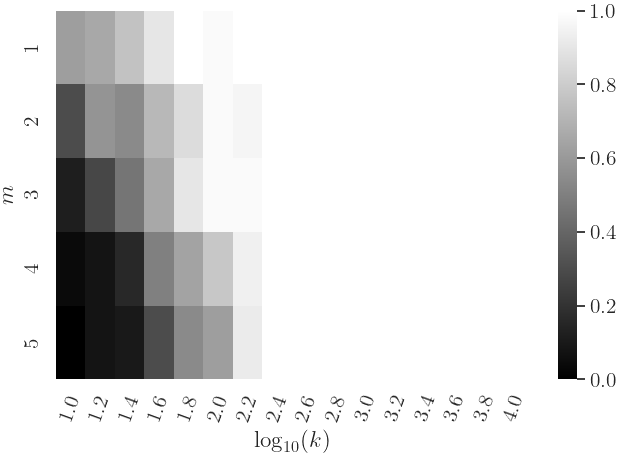}
        \caption{Empirical probability over 50 runs to converge to an optimal solution as a function of $m$ and $k$, i.e., where $\lossy(\yv_{10^5}) \leq 10^{-14}$.}
        \label{fig:r_sigmin}
    \end{subfigure}%
    \caption{Level of overparametrization needed for~\eqref{eq:bndR2} to hold compared to the one required to converge in practice.}
    \label{fig:r_r'}
\end{figure}

Our second experiment depicted in Figure~\ref{fig:plots_tau_n} aims at verifying experimentally the proper choice of $\gamma$ for gradient descent to converge to an optimal solution with different values $n$ and fixed $k=10^4$. For the tested range of $n$, the network is overparameterized and thus $n/k=O(1)$ in \eqref{eq:Lbnd}. One can see from Figure~\ref{fig:plots_tau_n} that there is a clear correlation between the dimension of the problem $n$ and the allowed choice of $\gamma$. Indeed, by concentration of Gaussian matrices, we have for any $\delta > 0$, $\norm{\fop} \leq \sqrt{n}+(1+\delta)\sqrt{m}$ with high probability. Thus, according to \eqref{eq:Lbnd}$, \gamma$ must get smaller as $n$ increases. This is confirmed by another experiment that we do not show here for obvious space limitation reasons, where we have observed that when we fix $n$ and $m$ and vary $k$, there is a threshold effect where, whatever $k$ is chosen, the network training will start to diverge when $\gamma \geq 10^{1.6}$, validating the earlier discussion.


\begin{figure}[t]
        \centering
        \includegraphics[width=0.5\linewidth]{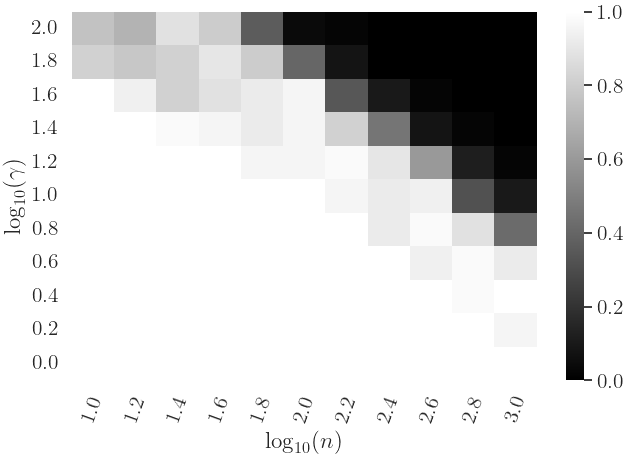}
        \caption{Probability over 50 runs for a network to converge to an optimal solution for various $n$ and $\gamma$.}
        \label{fig:plots_tau_n}
    \end{figure}%
    
    

For our next experiment, we will ilustrate how our models behave on images taken from the tiny ImageNet dataset, that we consider as vectors in $[0,255]^{4096}$. We used $k=10^4$ as it was empirically sufficient to achieve convergence. In Figure~\ref{fig:gaussian_blur}, $\fop$ is a convolution with a Gaussian kernel of standard deviation 1 (thus, $n=m$ here), and tested two scenarios: noiseless and one with low level of an additive zero-mean white Gaussian noise (AWGN) with standard deviation 2.5. In the noiseless case, the original image is perfectly recovered. The presence of noise, even small, entails a degraded reconstruction in the image space, thought the reconstruction is very good in the observation space. This is predicted by our theoretical results due to the fact that the blur operator, while being injective, is very badly conditioned, that is $\sigmin(\fop)\sim 10^{-5}$, hence greatly amplifying the noise in the reconstruction (see \eqref{eq:xrate_discr}). An early stopping strategy is thus necessary to avoid overfitting the noise.


\begin{figure*}[htb!]
    \centering
    \begin{subfigure}[t]{.99\textwidth}
        \centering
        \includegraphics[width=1\linewidth]{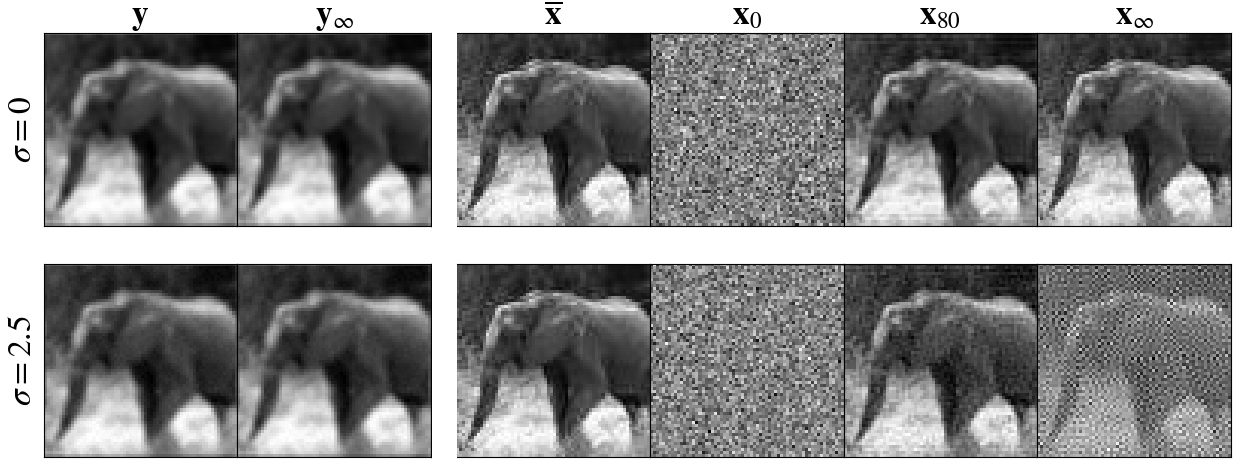}
        \caption{Reconstruction of an image when one uses Gaussian blur with no noise and with low level of noise.}
        \label{fig:gaussian_blur}
    \end{subfigure}%
    
    \begin{subfigure}[t]{.99\textwidth}
        \centering
        \includegraphics[width=1\linewidth]{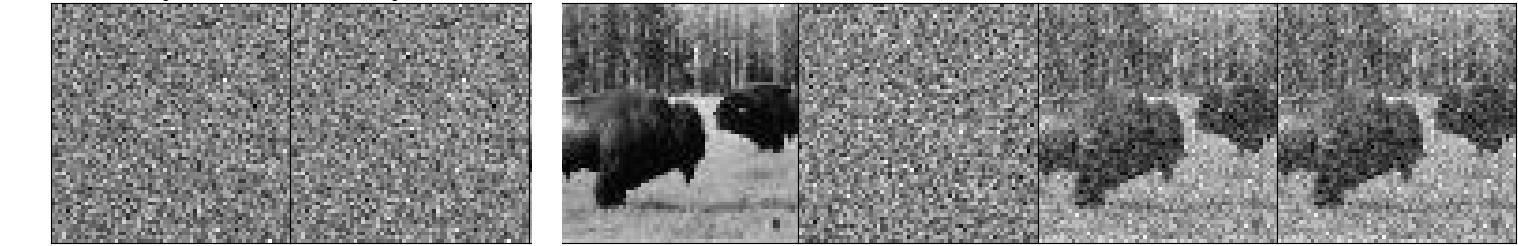}
        \caption{Evolution of the reconstruction through training with a well-conditioned operator and high level of noise.}
        \label{fig:Evolution_dip}
    \end{subfigure}%
    \caption{Deep Inverse Prior applied to image reconstruction.}
    \label{fig:imagerec}
\end{figure*}


In Figure~\ref{fig:Evolution_dip}, we changed $\fop$ to a better conditioned one. For this, we generated two random orthogonal matrices, a diagonal matrix with entries evenly spaced between 1 and 2, and then formed $\fop$ whose SVD is these three matrices. We used an AWGN with standard deviation 50, entailing that the observation is overwhelmed by noise (see left image of Figure~\ref{fig:Evolution_dip}). Figure~\ref{fig:Evolution_dip} also displays the reconstructed images obtained at increasing iterations of gradient descent training. We see that the reconstructed image gets better after a few iterations, before converging to a solution. To confirm these visual results, we report in Figure~\ref{fig:signal_evoluion} the evolution of the reconstruction error $\norm{\xvtau - \xvc}$ vs the iteration counter (the upper-bound predicted by our theorem is also shown in dashed line). We observe that indeed the recovery error decreases until $\tau \approx 10$ and then increases slightly before stabilizing.


\begin{figure}[!htb]
    \centering
    \includegraphics[width=0.5\linewidth]{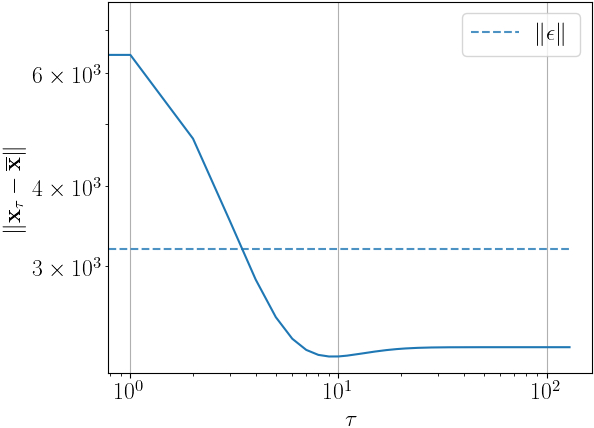}
    \caption{Evolution of the reconstruction error with a well-conditioned operator and high amount of noise.}
    \label{fig:signal_evoluion}
\end{figure}


\section{Conclusion and Future Work}\label{sec:conclu}
We studied the convergence of unsupervised networks in the inverse problem setting when trained through gradient descent. We showed that both convergence and recovery guarantees could be obtained which match the findings of the gradient flow case. Furthermore, we characterized the discretization error incurred which is of a constant factor at most. We also provided an overparametrization bound under which a two-layer DIP network will benefit from these guarantees with high probability. We would like in the future to study the supervised case in the setting where the network is seen as a push-forward measure to generate points on a manifold.

\bibliography{references}

\appendix

\section{Proofs}
\subsection{Proof of Theorem~\ref{thm:main_discr}}\label{appendix:proof_main_thm}
We first present useful lemmas that will help us prove the claims from Theorem~\ref{thm:main_discr}.

\begin{lemma}\label{lemma:descent_trajectory}
    Assume that~\ref{ass:l_smooth} and~\ref{ass:phi_diff} hold. Let $\Omega$ be a bounded convex set of $\R^p$. Then there exists a $L > 0$ such that for any $\thetav,\widetilde{\thetav} \in \Omega$, 
    \[
    |\lossy(\fop \gv(\uv,\widetilde{\thetav}))) - \lossy(\fop \gv(\uv,\thetav))) - \dotprod{\nabla_{\thetav} \lossy(\fop \gv(\uv,\thetav)))}{\widetilde{\thetav}-\thetav}| \leq \frac{L}{2} \norm{\widetilde{\thetav}-\thetav}^2 .
    \]
\end{lemma}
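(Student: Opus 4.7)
The statement is the classical descent lemma applied to the composition $F(\thetav) \eqdef \lossy(\fop\gv(\uv,\thetav))$. The plan is to reduce it to showing that $\nabla_{\thetav} F$ is $L$-Lipschitz continuous on the bounded convex set $\Omega$, and then to apply the standard integration argument along the line segment joining $\thetav$ and $\widetilde{\thetav}$ (which lies in $\Omega$ by convexity).

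First, I would establish four boundedness/Lipschitz facts on $\Omega$. (1) Since $\Omega$ is bounded and $\gv(\uv,\cdot)$ is continuous (by \ref{ass:phi_diff}), the image $\gv(\uv,\Omega)$ is bounded, hence so is $\fop\gv(\uv,\Omega)$. (2) The Jacobian $\Jg(\thetav)$ is, by the chain rule, a sum of products of weight matrices $\Wv^{(l)}$ and diagonal matrices whose entries are values of $\phi'$; since on $\Omega$ the weights are bounded and $|\phi'|\le B$, the map $\thetav \mapsto \Jg(\thetav)$ is uniformly bounded on $\Omega$. (3) Using that $\phi'$ is $B$-Lipschitz and bounded, and that products of bounded Lipschitz matrix-valued maps are Lipschitz, one shows that $\thetav\mapsto \Jg(\thetav)$ is itself Lipschitz on $\Omega$ with some constant $L_J$. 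In particular $\gv(\uv,\cdot)$ is Lipschitz on $\Omega$. (4) By \ref{ass:l_smooth}, $\nabla\lossy$ is Lipschitz on the bounded set $\fop\gv(\uv,\Omega)$, so $\thetav\mapsto \nabla\lossy(\fop\gv(\uv,\thetav))$ is Lipschitz on $\Omega$ (and bounded, since a continuous function on a bounded set has bounded range on the closure).

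Next, combine these into Lipschitz continuity of $\nabla_{\thetav} F$. By the chain rule,
\[
\nabla_{\thetav} F(\thetav) = \Jg(\thetav)^\top \fop^\top \nabla\lossy(\fop\gv(\uv,\thetav)).
\]
This is a product of a bounded Lipschitz matrix-valued map $\Jg(\thetav)^\top$ with a bounded Lipschitz vector-valued map $\fop^\top\nabla\lossy(\fop\gv(\uv,\thetav))$ (multiplying by the constant matrix $\fop^\top$ preserves these properties). Standard product-rule manipulations then yield a constant $L>0$, depending on $\Omega$, $\fop$, the weight bounds, $B$, and the local Lipschitz constant of $\nabla\lossy$, such that $\nabla_{\thetav} F$ is $L$-Lipschitz on $\Omega$.

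Finally, the descent inequality follows from a standard computation: for $\thetav,\widetilde{\thetav}\in\Omega$, convexity of $\Omega$ ensures that the segment $\thetav_t \eqdef \thetav+t(\widetilde{\thetav}-\thetav)$ stays in $\Omega$ for $t\in[0,1]$, so
\[
F(\widetilde{\thetav}) - F(\thetav) - \dotprod{\nabla_{\thetav} F(\thetav)}{\widetilde{\thetav}-\thetav} = \int_0^1 \dotprod{\nabla_{\thetav} F(\thetav_t) - \nabla_{\thetav} F(\thetav)}{\widetilde{\thetav}-\thetav}\, dt,
\]
and applying Cauchy–Schwarz together with the $L$-Lipschitz bound $\norm{\nabla_{\thetav} F(\thetav_t)-\nabla_{\thetav} F(\thetav)}\le Lt\norm{\widetilde{\thetav}-\thetav}$ yields the claimed $\tfrac{L}{2}\norm{\widetilde{\thetav}-\thetav}^2$ bound after integrating. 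The main (only) technical obstacle is the bookkeeping in step three above, namely showing that the Jacobian of the multilayer network is itself Lipschitz on $\Omega$ and deriving the combined product estimate; everything else is a direct application of \ref{ass:l_smooth} and \ref{ass:phi_diff} together with convexity of $\Omega$.
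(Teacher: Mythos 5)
Your proof is correct and follows essentially the same route as the paper: establish that $\nabla_{\thetav}\lossy(\fop\gv(\uv,\cdot))$ is $L$-Lipschitz on the bounded convex set $\Omega$ (the paper simply asserts this from \ref{ass:l_smooth} and \ref{ass:phi_diff}, whereas you spell out the chain-rule and product bookkeeping), and then apply the standard Taylor-with-integral-remainder argument along the segment, using convexity of $\Omega$ and Cauchy--Schwarz to get the $\tfrac{L}{2}\norm{\widetilde{\thetav}-\thetav}^2$ bound. No gaps; your extra detail on the Lipschitz continuity of the network Jacobian is just a more explicit version of the step the paper takes for granted.
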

\begin{proof}
   By \ref{ass:l_smooth} and~\ref{ass:phi_diff} , we know that $\nabla_{\thetav} \lossy(\fop \gv(\uv,\cdot)))$ is Lipschitz continuous on bounded sets. Thus, for a convex bounded set $\Omega \subset \R^p$, there exists $L$ such that $\nabla_{\thetav} \lossy(\fop \gv(\uv,\cdot)))$ is $L$-Lipschitz continuous on $\Omega$. Now, take $\thetav,\widetilde{\thetav} \in \Omega$, we have by Taylor formula with integral term that
   \begin{multline*}
          \lossy(\fop \gv(\uv,\widetilde{\thetav}))) - \lossy(\fop \gv(\uv,\thetav))) - \dotprod{\nabla_{\thetav} \lossy(\fop \gv(\uv,\thetav)))}{\widetilde{\thetav}-\thetav} \\
          = \int_{0}^1 \dotprod{\nabla_{\thetav} \lossy(\fop \gv(\uv,\thetav+s(\widetilde{\thetav}-\thetav))))-\nabla_{\thetav} \lossy(\fop \gv(\uv,\thetav)))}{\widetilde{\thetav}-\thetav} ds.
      \end{multline*}
      By convexity, $\thetav+s(\widetilde{\thetav}-\thetav) \in \Omega$ for any $s \in [0,1]$, and $L$-Lipschitz continuity of $\nabla_{\thetav} \lossy(\fop \gv(\uv,\cdot)))$ on $\Omega$ then yields
      \begin{align*}
          &|\lossy(\fop \gv(\uv,\widetilde{\thetav}))) - \lossy(\fop \gv(\uv,\thetav))) - \dotprod{\nabla_{\thetav} \lossy(\fop \gv(\uv,\thetav)))}{\widetilde{\thetav}-\thetav}| \\
          &\leq \int_{0}^1 \norm{\nabla_{\thetav} \lossy(\fop \gv(\uv,\thetav+s(\widetilde{\thetav}-\thetav))))-\nabla_{\thetav} \lossy(\fop \gv(\uv,\thetav)))}\norm{\widetilde{\thetav}-\thetav} ds \\
          &\leq L \norm{\widetilde{\thetav}-\thetav}^2 \int_{0}^1 s ds = \frac{L}{2} \norm{\widetilde{\thetav}-\thetav}^2 . 
      \end{align*}
\end{proof}

\begin{lemma}\label{lemma:l_non_increasing}
    Assume \ref{ass:l_smooth}, \ref{ass:min_l_zero} and \ref{ass:phi_diff} hold and that $\thetavseqtau$ is in a bounded convex set of $\R^p$. Then, there exists a $L>0$ such that if $\gamma\in]0,\frac{1}{L}]$,
    \begin{align}\label{eq:descent_size}
        \lossytauplusy - \lossytauy \leq -\eta\norm{\nabla_{\thetav}\lossytauy}^2,
    \end{align}
    with $\eta = \gamma - \frac{L\gamma^2}{2} \in [0,\frac{1}{2L}]$, ensuring that $(\lossytauy)_{\tau\in\N}$ is non-increasing and that $\thetavseqtau$ converges.
\end{lemma}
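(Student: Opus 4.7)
The plan is to derive \eqref{eq:descent_size} as a direct application of Lemma~\ref{lemma:descent_trajectory} to two consecutive iterates of \eqref{eq:grad_descent}, and to then read off the monotonicity and convergence claims.

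Since $\thetavseqtau$ lies in some bounded convex set $\Omega\subset\R^p$ (any closed ball large enough to contain the whole sequence works), Lemma~\ref{lemma:descent_trajectory} furnishes a constant $L>0$ such that the composite objective $\thetav\mapsto\lossy(\fop\gv(\uv,\thetav))$ has $L$-Lipschitz gradient on $\Omega$ together with the associated quadratic upper bound. I would apply this bound at $\thetav=\thetavtau$ and $\widetilde{\thetav}=\thetavtauplus=\thetavtau-\gamma\nabla_{\thetav}\lossytauy$; both points lie in $\Omega$ by construction. Substituting the explicit update into the inner-product term yields
\[
\lossytauplusy-\lossytauy \;\leq\; -\gamma\,\norm{\nabla_{\thetav}\lossytauy}^2+\frac{L\gamma^2}{2}\,\norm{\nabla_{\thetav}\lossytauy}^2 \;=\; -\eta\,\norm{\nabla_{\thetav}\lossytauy}^2,
\]
with $\eta=\gamma-L\gamma^2/2$. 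An elementary analysis of the quadratic $\gamma\mapsto\gamma-L\gamma^2/2$ on $]0,1/L]$ shows $\eta\in]0,1/(2L)]$, the maximum being attained at $\gamma=1/L$. This is exactly \eqref{eq:descent_size} and implies that $(\lossytauy)_\tau$ is non-increasing.

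For convergence, $\lossy\geq 0$ by \ref{ass:min_l_zero}, so the non-increasing sequence $(\lossytauy)_\tau$ converges in $\R$. Telescoping the descent inequality yields $\eta\sum_\tau\norm{\nabla_{\thetav}\lossytauy}^2\leq\lossy(\fop\gv(\uv,\thetavz))<+\infty$; via the update rule, this rewrites as $\sum_\tau\norm{\thetavtauplus-\thetavtau}^2<+\infty$, so in particular $\norm{\thetavtauplus-\thetavtau}\to 0$. Combined with boundedness of $\thetavseqtau$ and continuity of $\nabla_{\thetav}\lossytauy$ in $\thetav$, a Bolzano--Weierstrass argument then shows that every accumulation point of $(\thetavtau)$ is a critical point of the composite objective.

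The only non-routine step is promoting this subsequential information to convergence of the whole sequence. The natural route in the paper's setting exploits the KL property \ref{ass:l_kl} of $\lossy$ together with the chain rule $\nabla_{\thetav}\lossy(\fop\gv(\uv,\thetav))=\Jg^\top\fop^\top\nabla\lossy(\fop\gv(\uv,\thetav))$, which transfers a KL-type inequality from $\lossy$ to the composite objective on a neighborhood of the limit set; the classical Kurdyka--\L ojasiewicz telescoping argument then upgrades summability of $\norm{\thetavtauplus-\thetavtau}^2$ to summability of $\norm{\thetavtauplus-\thetavtau}$ itself, yielding Cauchy-ness and convergence of $(\thetavtau)$. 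This KL bookkeeping---in particular, ensuring the composite objective inherits a usable desingularizing function from $\psi$ on the portion of the trajectory that matters---is the main obstacle of the formal write-up.
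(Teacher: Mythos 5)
Your derivation of \eqref{eq:descent_size} and of the monotonicity and convergence of the loss values is correct and is exactly the paper's route: apply Lemma~\ref{lemma:descent_trajectory} to the consecutive iterates $\thetavtau$, $\thetavtauplus$, substitute the update \eqref{eq:grad_descent}, take $\gamma\in]0,1/L]$, and use boundedness from below via \ref{ass:min_l_zero}.

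The gap is in the last claim, convergence of $\thetavseqtau$ itself. From the telescoped descent inequality you only get $\sum_\tau\norm{\thetavtauplus-\thetavtau}^2<+\infty$, hence vanishing steps and subsequential convergence to critical points; this does not give Cauchy-ness, and you explicitly defer the upgrade to a KL finite-length argument without carrying it out. That upgrade is the actual content of the claim, and it cannot be closed with the tools you have at this point: transferring the KL inequality of $\lossy$ (which \ref{ass:l_kl} gives on a sublevel set in $\R^m$, and which this lemma does not even list among its hypotheses) to the composite objective in $\thetav$ requires a bound of the form $\norm{\nabla_{\thetav}\lossy(\fop\gv(\uv,\thetavtau))}\geq \sigmin(\Jg(\thetavtau))\,\sigminA\,\norm{\nabla\lossy(\yvtau)}$ with a uniform lower bound on $\sigmin(\Jg(\thetavtau))$ along the whole trajectory. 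None of the assumptions of this lemma provide such a bound; in the paper it is obtained only later, in Lemma~\ref{lemma:link_params_singvals_discr}, under the initialization condition \eqref{eq:bndR2} ($R'<R$), via the Lyapunov sequence \eqref{eq:psi_lyapu}, and it is there that finite length $\sum_\tau\norm{\thetavtauplus-\thetavtau}<+\infty$ and hence convergence of $\thetavseqtau$ (with the rate \eqref{eq:thetarate2}) is actually established. To be fair, the paper's own one-line justification of this last claim is itself defective: it infers summability of $\norm{\nabla_{\thetav}\lossytauy}$ from summability of its squares, which is not a valid implication. So your instinct that a KL-type finite-length argument is the missing ingredient is sound, but as written your proposal does not prove the convergence of $\thetavseqtau$, and under only the hypotheses stated in the lemma that claim is not within reach of the sketched argument.
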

\begin{proof}
    We apply \ref{lemma:descent_trajectory} with $\thetavtau$ and $\thetavtauplus$ which gives that for a $L>0$
    \begin{align*}
       &|\lossytauplusy - \lossytauy - \dotprod{\nabla_{\thetav} \lossytauy}{\thetavtauplus-\thetavtau}| \leq \frac{L}{2} \norm{\thetavtauplus-\thetavtau}^2\\
       &\lossytauplusy - \lossytauy \leq \left(\frac{L\gamma^2}{2} - \gamma\right)\norm{\nabla_{\thetav}\lossytauy}^2.
    \end{align*}
    Thus choosing $\gamma\in]0,\frac{1}{L}]$ gives that $(\lossytauy)_{\tau\in\N}$ is non-increasing. Combining this with \ref{ass:min_l_zero} which bounds $\lossy$ by below means that $(\lossytauy)_{\tau\in\N}$ converges. From there we see that
    \begin{align*}
        \sum_{\tau=0}^\infty \norm{\nabla_{\thetav}\lossytauy}^2 \leq \frac{1}{\eta}\sum_{\tau=0}^\infty \lossytauy - \lossytauplusy \leq \frac{1}{\eta}\lossy(\yvz)
    \end{align*}
    which shows that $\left(\norm{\nabla_{\thetav}\lossytauy}\right)_{\tau\in\N}$ is summable, in turn ensuring that $\thetavseqtau$ converges.
\end{proof}


\begin{lemma}\label{lemma:link_params_singvals_discr}
Assume \ref{ass:l_smooth} to \ref{ass:theta_bounded} hold. Recall $R$ and $R'$ from \eqref{eq:RandR'2}. Let $\thetavseqtau$ be the sequence given by~\eqref{eq:grad_descent}.
    \begin{enumerate}[label=(\roman*)]
        \item \label{claim:singvals_bounded_if_params_bounded_discr} If $\thetav \in \Ball(\thetavz,R)$ then
        \begin{align*}
            \sigmin(\jtheta) \geq \sigminjgz/2.
        \end{align*}
        
        \item  \label{claim:params_bounded_if_singvals_bounded_discr} There exists a constant $L\in\R^+$ such that if $\gamma \leq \frac{1}{L}$ and if for all $s\in \{0,\dots,\tau\}$, $\sigminjgs \geq \frac{\sigminjgz}{2}$, then
        \begin{align*}
            \thetavtau \in \Ball(\thetavz,R').
        \end{align*}
        
        \item \label{claim:sigval_bounded_everywhere_discr}
        If $R'<R$, then for all $t \geq 0$, $\sigminjgt \geq \sigminjgz/2$.
\end{enumerate}
\end{lemma}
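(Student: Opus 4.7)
The three claims form a coupled chain: claim~\ref{claim:singvals_bounded_if_params_bounded_discr} is a Weyl-type perturbation bound, claim~\ref{claim:params_bounded_if_singvals_bounded_discr} is a discrete Lyapunov trajectory-length estimate built on the KL inequality, and claim~\ref{claim:sigval_bounded_everywhere_discr} is a bootstrap combining the two. For claim~\ref{claim:singvals_bounded_if_params_bounded_discr}, I would use the fact that $\sigmin(\cdot)$ is $1$-Lipschitz in the spectral norm together with the local Lipschitz continuity of $\Jg$ on $\Ball(\thetavz,R)$: for any $\thetav\in\Ball(\thetavz,R)$,
\[
\sigmin(\jtheta) \geq \sigminjgz - \Lip_{\Ball(\thetavz,R)}(\Jg)\,\norm{\thetav-\thetavz} \geq \sigminjgz - \Lip_{\Ball(\thetavz,R)}(\Jg)\,R = \sigminjgz/2,
\]
by the very definition of $R$ in~\eqref{eq:RandR'2}.

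For claim~\ref{claim:params_bounded_if_singvals_bounded_discr}, my plan is to mimic the gradient-flow argument of~\cite{buskulic2023convergenceJournal} in discrete time, with $\psi\circ\lossy(\fop\gv(\uv,\cdot))$ as a Lyapunov functional. Let $L$ be the constant from Lemma~\ref{lemma:l_non_increasing}; for $\gamma\le 1/L$, the descent estimate~\eqref{eq:descent_size} provides a one-step decrement of $\eta\,\norm{\nabla_\thetav\lossy(\fop\gv(\uv,\thetav_s))}^2$ with $\eta=\gamma-L\gamma^2/2$. Combining this with concavity of the KL desingularizing function $\psi$ would yield
\[
\psi(\lossy(\fop\gv(\uv,\thetav_s))) - \psi(\lossy(\fop\gv(\uv,\thetav_{s+1}))) \geq \eta\,\psi'(\lossy(\fop\gv(\uv,\thetav_s)))\,\norm{\nabla_\thetav\lossy(\fop\gv(\uv,\thetav_s))}^2 .
\]
I would then factor out one copy of $\norm{\nabla_\thetav\lossy(\cdot)}$ and combine the chain rule $\nabla_\thetav\lossy = \Jg(\thetav_s)^\top\fop^\top\nabla\lossy$ with the singular-value bound $\sigmin(\fop\Jg(\thetav_s))\ge\sigminA\sigminjgs\ge\sigminA\sigminjgz/2$ and the KL inequality $\psi'(\lossy(\cdot))\,\norm{\nabla\lossy(\cdot)}\ge 1$ to obtain $\psi'(\lossy(\cdot))\,\norm{\nabla_\thetav\lossy(\cdot)}\ge\sigminA\sigminjgz/2$. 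Substituting $\norm{\thetav_{s+1}-\thetav_s}=\gamma\,\norm{\nabla_\thetav\lossy(\cdot)}$ and telescoping would yield $\sum_{s=0}^{\tau-1}\norm{\thetav_{s+1}-\thetav_s}\le (2\gamma/(\eta\sigminA\sigminjgz))\,\psi(\lossy(\fop\gv(\uv,\thetav_0)))$; since $\gamma/\eta = 1/(1-\gamma L/2)\le\nu_1$, the triangle inequality would conclude $\norm{\thetav_\tau-\thetavz}\le R'$.

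Claim~\ref{claim:sigval_bounded_everywhere_discr} would then follow by induction on $\tau$. The base case is immediate; for the inductive step the hypothesis $\sigmin(\Jg(\thetav_s))\ge\sigminjgz/2$ for $s\le\tau-1$ would trigger claim~\ref{claim:params_bounded_if_singvals_bounded_discr}, yielding $\thetav_\tau\in\Ball(\thetavz,R')\subset\Ball(\thetavz,R)$ since $R'<R$, whereupon claim~\ref{claim:singvals_bounded_if_params_bounded_discr} would give $\sigmin(\Jg(\thetav_\tau))\ge\sigminjgz/2$ and close the induction. The hard part will be the Lyapunov estimate for claim~\ref{claim:params_bounded_if_singvals_bounded_discr}: the KL inequality lives in $\R^m$ while the target bound lives in parameter space, so the singular-value estimate on $\Jg(\thetav_s)^\top\fop^\top$ must be shown to genuinely control $\nabla\lossy(\fop\gv(\uv,\thetav_s))$ (via its component in the column space of $\fop\Jg(\thetav_s)$), and the delicate balance between the descent coefficient $\eta$ and the displacement step $\gamma$ is precisely what produces the constant $\nu_1=(1+\gamma L)/(1-\gamma L/2)$ distinguishing the discrete iterates from their gradient-flow counterpart.
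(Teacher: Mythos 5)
Your handling of claims \ref{claim:singvals_bounded_if_params_bounded_discr} and \ref{claim:sigval_bounded_everywhere_discr} (Weyl-type $1$-Lipschitzness of $\sigmin$ combined with the local Lipschitz constant of $\Jg$ and the definition of $R$, then a bootstrap induction using $R'<R$) is correct and is exactly the argument the paper delegates to \cite[Lemma~3.10]{buskulic2023convergenceJournal}. The genuinely new part is claim \ref{claim:params_bounded_if_singvals_bounded_discr}, and there your argument has a gap: the one-step inequality $\psi(\lossy(\fop\gv(\uv,\thetav_s))) - \psi(\lossy(\fop\gv(\uv,\thetav_{s+1}))) \geq \psi'(\lossy(\fop\gv(\uv,\thetav_s)))\,\bigl(\lossy(\fop\gv(\uv,\thetav_s)) - \lossy(\fop\gv(\uv,\thetav_{s+1}))\bigr)$ is a \emph{concavity} inequality for $\psi$, but Definition~\ref{def:KL} only requires $\psi$ to be strictly increasing, continuous at $0$ and $\cC^1$ on $]0,r_0[$; concavity is nowhere assumed in this paper. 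This is precisely the step the paper's proof works around: it introduces the Lyapunov sequence \eqref{eq:psi_lyapu}, $V_\tau = \psi(\lossytauy) + \alpha\sum_{i=0}^{\tau-1}\norm{\thetav_{i+1}-\thetav_i}$, applies the mean value theorem so that $\psi'$ is evaluated at an intermediate loss value $\lossy(\yv_{\tau_\delta})$, invokes the KL inequality at that intermediate point, and transfers the bound back to step $\tau$ using the local $L$-Lipschitz continuity of $\nabla_{\thetav}\lossy(\fop\gv(\uv,\cdot))$ (available from \ref{ass:l_smooth}, \ref{ass:phi_diff}, \ref{ass:theta_bounded}), at the price of $\norm{\nabla_{\thetav}\lossy(\yv_{\tau_\delta})} \leq (1+\gamma L)\norm{\nabla_{\thetav}\lossytauy}$ as in \eqref{eq:bound_psi_delta}. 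That transfer is the source of the numerator of $\nu_1=\frac{1+\gamma L}{1-\gamma L/2}$; your own accounting only produces the ratio $\gamma/\eta = \frac{1}{1-\gamma L/2}$, so attributing $\nu_1$ to the $\eta$-versus-$\gamma$ balance alone is not accurate.

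The gap is repairable in two ways. Either add concavity of $\psi$ to the KL assumption (standard in the KL literature and satisfied by the examples invoked here, e.g.\ $\psi(s)=cs^{1/2}$ for the MSE); then your telescoping argument is valid and in fact gives the slightly sharper bound $\norm{\thetav_\tau-\thetavz} \leq \frac{2\psi(\lossyzy)}{(1-\gamma L/2)\,\sigminA\sigminjgz} \leq R'$, i.e.\ a different and marginally better route than the paper's. Or keep the paper's weaker definition and replace your concavity step by the MVT argument sketched above, which is what forces the $(1+\gamma L)$ factor built into $R'$ in \eqref{eq:RandR'2}. Two further points to make explicit in either route: the KL inequality may only be used on $[\lossy < \lossy(\yvz)+\eta]$, so you need the monotonicity of $(\lossytauy)_\tau$ from Lemma~\ref{lemma:l_non_increasing} (and, in the MVT route, that the intermediate value lies between consecutive losses) to stay in that region; and the chain-rule estimate $\norm{\nabla_{\thetav}\lossy} \geq \sigminA\,\sigmin(\Jg)\,\norm{\nabla\lossy}$, which you rightly flag, is handled exactly as in the gradient-flow analysis and is not where the discrete-time difficulty lies.
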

\begin{proof}
The proof of~\ref{claim:singvals_bounded_if_params_bounded_discr} and~\ref{claim:sigval_bounded_everywhere_discr} are found in~\cite[Lemma 3.10]{buskulic2023convergenceJournal}. We now prove claim~\ref{claim:params_bounded_if_singvals_bounded_discr}. By~\ref{ass:theta_bounded}, $\thetavseqtau$ is bounded and thus in a bounded convex set of $\R^p$, which allows us to use Lemma~\ref{lemma:l_non_increasing}. We thus know that there exist $L>0$ such that when $\gamma \leq 1/L$, \eqref{eq:grad_descent} will ensure that $(\lossy(\yvtau))_{\tau\in\mathbb{N}}$ is non-increasing. This allows us to use the KL property of the loss as stated in~\ref{ass:l_kl} for any step $\tau$. With that in place, we can use the following sequence
\begin{align}\label{eq:psi_lyapu}
    V_\tau = \psi(\lossy(\yvtau)) + \alpha \sum_{i=0}^{\tau  - 1}\norm{\thetav_{i+1} - \thetav_i}
\end{align}
where $\psi$ is the desingularizing function given by the KL property of $\lossy$ and $\alpha\in\R^*_+$. Our goal is to show that for a sufficiently small $\alpha$, this sequence is non-increasing. With that in mind, let us start with the fact that
\begin{align*}
    V_{\tau+1} - V_\tau = \psi(\lossy(\yv_{\tau+1})) - \psi(\lossytauy) + \alpha \norm{\thetavtauplus - \thetavtau}.
\end{align*}
By the mean value theorem, we have
\begin{align}\label{eq:mvt_kl}
    \psi(\lossy(\yv_{\tau+1})) - \psi(\lossytauy) = \psi'(\lossy(\yv_{\tau_\delta}))(\lossy(\yv_{\tau+1}) - \lossytauy),
\end{align}
with $\delta\in[0,1]$. Going back to our sequence and using~\eqref{eq:mvt_kl} and~\eqref{eq:descent_size} we obtain
\begin{align*}
    V_{\tau+1} - V_\tau &\leq  \psi'(\lossy(\yv_{\tau_\delta}))(\lossy(\yv_{\tau+1}) - \lossytauy) + \alpha \norm{\thetavtauplus - \thetavtau}\\
    &\leq - \psi'(\lossy(\yv_{\tau_\delta}))\eta\norm{\nabla_{\thetav}\lossytauy}^2 + \alpha\gamma\norm{\nabla_{\thetav}\lossytauy}\\
    &\leq \norm{\nabla_{\thetav}\lossytauy}\left(\alpha\gamma - \psi'(\lossy(\yv_{\tau_\delta}))\eta\norm{\nabla_{\thetav}\lossytauy}\right).
\end{align*}
Our next step is to bound $\psi'(\lossy(\yv_{\tau_\delta}))$:
\begin{align}\label{eq:bound_psi_delta}
    \psi'(\lossy(\yv_{\tau_\delta})) &\geq \frac{1}{\norm{\nabla_{\yv}\lossy(\yv_{\tau_\delta})}} \nonumber\\
    &\geq \frac{\sigmin(\Jgtau)\sigminA}{\norm{\nabla_{\thetav}\lossy(\yv_{\tau_\delta})}} \nonumber\\
    &\geq \frac{\sigmin(\Jgtau)\sigminA}{\norm{\nabla_{\thetav}\lossytauy} + \norm{\nabla_{\thetav}\lossy(\yv_{\tau_\delta}) - \nabla_{\thetav}\lossytauy}}  \nonumber\\
    &\geq \frac{\sigmin(\Jgtau)\sigminA}{\norm{\nabla_{\thetav}\lossytauy} + L\norm{\thetav_{\tau_\delta} - \thetavtau}} \nonumber\\
    &\geq \frac{\sigmin(\Jgtau)\sigminA}{(1+\gamma L)\norm{\nabla_{\thetav}\lossytauy}}.
\end{align}

Going back to the sequence $(V_\tau)_{\tau\in\mathbb{N}}$ and combining this last result with the fact that for any $s$, $\sigminjgs \geq \frac{\sigminjgz}{2}$, we get
\begin{align*}
    V_{\tau+1} - V_\tau &\leq \norm{\nabla_{\thetav}\lossytauy}\left(\alpha\gamma - \eta\frac{\sigminjgz\sigminA}{2(1+\gamma L)}\right)\\
\end{align*}
This shows that as long as $\alpha \leq \frac{\sigminjgz\sigminA}{2\nu}$, with $\nu=\frac{1+\gamma L}{1-\gamma L / 2} \in [1,4]$, $(V_\tau)_{\tau\in\mathbb{N}}$ is non-increasing and thus that $V_\tau\leq V_0$. This allows us to obtain that
    \begin{align*}
        \norm{\thetav_\tau - \thetavz} \leq \sum_{i=0}^{\tau - 1}\norm{\thetav_{i+1} - \thetav_i} \leq \frac{1}{\alpha}\psi(\lossyzy),
    \end{align*}
    which gives us that $\thetavseqtau \in \Ball(\thetavz,R')$ which is the desired claim. 
    
\end{proof}

\begin{proof}[Proof of Theorem~\ref{thm:main_discr}]
\begin{enumerate}[label=(\roman*)]
\item We use several energy functions to show our results. We first take as our energy function the loss $\loss$. By Lemma~\ref{lemma:link_params_singvals_discr}~\ref{claim:sigval_bounded_everywhere_discr} and~\eqref{eq:bndR2}, we have that $\sigmin(\Jg(\tau))\geq\sigminjgz/2$. In turn, Lemma~\ref{lemma:link_params_singvals_discr}~\ref{claim:params_bounded_if_singvals_bounded_discr} gives us that $\thetavseqtau \in \Ball(\thetavz,R')$. With the boundedness of $\thetav$, we can apply classical descent lemma steps and find using Lemma~\ref{lemma:l_non_increasing} with the condition $\gamma\leq1/L$, that the loss is non-increasing w.r.t $\tau$. Embarking from~\eqref{eq:descent_size} we have
\begin{align*}
    \lossy(\yv_{\tau+1}) - \lossytauy &\leq -\eta\norm{\nabla_{\thetav}\lossytauy}^2\\ 
\end{align*}
with $\eta = \gamma - \frac{L\gamma^2}{2} \in [0,\frac{1}{2L}]$. Let $\Psi$ be a primitive of $-\psi'^2$. Then, by the mean value theorem and that $\Psi$ is strictly decreasing we get
\begin{align*}
    \Psi(\lossytauplusy) - \Psi(\lossytauy) &= \Psi'(\lossy(\yv_{\tau_\delta}))\left(\lossy(\yv_{\tau+1}) - \lossytauy\right)\\
    &= -\psi'(\lossytaudeltay)^2\left(\lossytauplusy - \lossytauy\right)\\
    &\geq \psi'(\lossytaudeltay)^2\eta\norm{\nabla_{\thetav}\lossytauy}^2.
\end{align*}
We can use the bound found in~\eqref{eq:bound_psi_delta} and the bound on $\sigmin(\Jg(\tau))$ to obtain
\begin{align*}
    \Psi(\lossytauplusy) - \Psi(\lossytauy) \geq \eta\frac{\sigminjgz^2\sigminA^2}{4(1+\gamma L)^2}.
\end{align*}
From this, we use once again that $\Psi$ and $\Psi\inv$ are (strictly) decreasing to see that
\begin{align*}
    \Psi(\lossytauy) - \Psi(\lossyzy) &= \sum_{i=0}^{\tau-1}\Psi(\lossy(\yv_{i+1})) - \Psi(\lossy(\yv_{i})) \\
    &\geq \frac{\sigminjgz^2\sigminA^2\eta}{4(1+\gamma L)^2}\tau
\end{align*}
Thus,
\begin{align*}
    \lossytauy &\leq \Psi\inv\left( \tau\frac{\sigminjgz^2\sigminA^2\eta}{4(1+\gamma L)^2} + \Psi(\lossyzy) \right),
\end{align*}
which gives~\eqref{eq:lossrate2}.

We now know that the loss converges to 0. The next question is about the rate of convergence of $\thetavseqtau$. First, thanks to Lemma~\ref{lemma:l_non_increasing} we know that $\thetavseqtau$ converges to say $\thetav_\infty$. Going back to the sequence defined in~\eqref{eq:psi_lyapu} which we know is non-increasing when $\alpha =\frac{\sigminjgz\sigminA}{2\nu}$, we can observe that
\begin{align*}
    V_\infty - V_\tau = \psi(\lossy(\yv_\infty)) - \psi(\lossytauy) + \alpha\sum_{i=\tau}^{\infty} \norm{\thetav_{i+1} - \thetav_i} \leq 0.
\end{align*}
Thus, since the loss converges to 0, we get
\begin{align*}
    \norm{\thetavtau - \thetav_\infty} \leq \sum_{i=\tau}^{\infty} \norm{\thetav_{i+1} - \thetav_i} \leq \frac{1}{\alpha}\psi(\lossytauy)
\end{align*}
which proves~\eqref{eq:thetarate2}.

\item We use the same proof as for~\cite[Theorem 3.2 (ii)]{buskulic2023convergenceJournal} and we adapt it to match the new constant in the loss rate.

By continuity of $\fop$ and $\gv(\uv,\cdot)$ we infer that $(\yv_\tau)_{\tau\in\mathbb{N}}$ also converges to some value $\y_\infty$. The continuity of $\lossy(\cdot)$ and~\eqref{eq:lossrate2} gives us
\begin{align*}
    0 = \lim_{\tau \to +\infty}\lossytauy = \lossy(\yv_\infty)
\end{align*}
and thus $\yv_\infty \in \Argmin(\lossy)$. Since the latter is the singleton $\{\yv\}$ by assumption we conclude.

In order to obtain the early stopping bound, we use~\cite[Theorem~5]{bolte2017error} that links the KL property of $\lossy(\cdot)$ with an error bound. In our case, this reads
\begin{align}\label{eq:klerrbnd}
\dist(\yvtau,\Argmin(\lossy)) = \norm{\yvtau - \yv} \leq \psi(\lossytauy).
\end{align}
It then follows that
\begin{align*}
    \norm{\yvtau - \yvc} &\leq \norm{\yvtau - \yv} + \norm{\yv - \yvc}\\
    &\leq \psi(\lossytauy) + \norm{\veps}\\
    &\leq \psi\left(\Psi\inv\left(\eta\frac{\sigminF^2\sigmin(\Jgz)^2\eta}{4(1+\gamma L)^2}\tau + \Psi(\lossyzy)\right)\right) + \norm{\veps}.
\end{align*}
Using that $\psi$ is increasing and $\Psi$ is decreasing, the first term is bounded by $\norm{\veps}$ for all $\tau \geq \frac{4(1+\gamma L)^2\Psi(\psi\inv(\norm{\veps}))}{\sigminF^2\sigminjgz^2}-\Psi(\lossyzy)$.

    \item By lemma~\ref{lemma:link_params_singvals_discr}, we have that $\forall\tau\in\mathbb{N},\thetavtau \in \Ball(\thetavz,R')$ and thus that $\xvtau \in \Sigma'$. Moreover,~\ref{ass:A_inj} ensures that $\lmin(\fop;T_{\Sigma'}(\xvcsigma)) > 0$ which allows us to get the following chain of inequalities:
    \begin{align*}
        \norm{\xvtau - \xvc} &\leq \norm{\xvtau - \xvcsigma} + \dist(\xvc,\Sigma')\\
        \text{\small \ref{ass:A_inj}}&\leq \frac{1}{\lmin(\fop;T_{\Sigma'}(\xvcsigma))} \norm{\yvtau - \fop\xvcsigma} + \dist(\xvc,\Sigma')\\
        &\leq \frac{1}{\lmin(\fop;T_{\Sigma'}(\xvcsigma))}\left( \norm{\yvtau - \yv} + \norm{\yv - \fop\xvc} + \norm{\fop(\xvc - \xvcsigma)}  \right)+ \dist(\xvc,\Sigma')\\
        \begin{split}
         {\small \eqref{eq:prob_inv},\eqref{eq:lossrate2},\eqref{eq:klerrbnd}}  &\leq \frac{\psi(\Psi\inv(\xi_\tau))}{\lmin(\fop;T_{\Sigma'}(\xvcsigma))}+ \frac{\norm{\epsilon}}{\lmin(\fop;T_{\Sigma'}(\xvcsigma))}\\
           &\qquad  + \left(\frac{\norm{\fop}}{\lmin(\fop;T_{\Sigma'}(\xvcsigma))} + 1\right)\dist(\xvc,\Sigma')
        \end{split}
    \end{align*}
    which conclude the proof of~\eqref{eq:xrate_discr}.
\end{enumerate}
\end{proof}

\subsection{Proof of Corollary~\ref{cor:dip_two_layers_converge_discr}}

\begin{proof}
    The only difference between the $R$ and $R'$ from~\eqref{eq:bndR2} and the ones from~\cite[Theorem~3.2]{buskulic2023convergenceJournal} is that $R'$ in our case is multiplied by a constant factor. This allows us to prove this corollary using the same proof structure as the one for~\cite[Theorem~4.1]{buskulic2023convergenceJournal} that we adapt to take the constant in $R'$ into account and the fact that our operator is linear.
    
    Due to the fact that the assumptions of the original Theorem and this corollary are the same, it means that both~\cite[Lemma~4.9]{buskulic2023convergenceJournal} and~\cite[Lemma~4.10]{buskulic2023convergenceJournal} stay the same. Furthermore, since $R$ also remains identical in both cases, we we can directly get from the original proof that
    \[
    R \geq C_1\pa{\frac{k}{n}}^{1/4}
    \]
    whenever $k \gtrsim n$, $C_1$ being a positive constant that depends only on $B$, $\Cphi$, $\Cphid$ and $D$.

    Concerning $R'$, we first need to adjust the result from~\cite[Lemma~4.9]{buskulic2023convergenceJournal} to our linear operator which gives us that
    \begin{align}\label{eq:init_error_linear}
        \norm{\yv(0) - \yv} \leq C\norm{\fop} \sqrt{n\log(d)} + \sqrt{m}\pa{\norminf{\fop\xvc} + \norminf{\veps}} ,
    \end{align}
    with probability at least $1 - d^{-1}$, where $C$ is a constant that depends only on $B$, $\Cphi$, and $D$. From here we follow the original proof and get that by using the descent lemma in~\cite[Lemma~2.64]{BauschkeBook}
    \[
    \lossyzy \leq \max_{\vv \in [\yv,\yvz]}\frac{\norm{\nabla \lossy(\vv)}}{\norm{\vv-\yv}} \frac{\norm{\yvz-\yv}^2}{2} .
    \] 
    Combining this with~\eqref{eq:init_error_linear} and that
    \[
    [\yv,\yvz] \subset \Ball(0,\norm{\yv}+\norm{\yvz}) 
    \]
    allows us to obtain that with probability at least $1 - d^{-1}$ we have
    \[
    \lossyzy \leq \frac{\LLz}{2}\pa{C\norm{\fop} \sqrt{n\log(d)} + \sqrt{m}\pa{\norminf{\fop\xvc} + \norminf{\veps}}}^2 .
    \] 
    Lastly using the union bound with the fact that $\psi$ is increasing we obtain that~\eqref{eq:bndR2} is reached with probability at least $1-2n\inv - d\inv$ when
    \begin{equation}\label{eq:R'Rbnd}
    \frac{16}{\sigminA\sqrt{\Cphi^2 + \Cphid^2}}\psi\pa{\frac{\LLz}{2}\pa{C\norm{\fop} \sqrt{n\log(d)} + \sqrt{m}\pa{\norminf{\fop\xvc} + \norminf{\veps}}}^2} < C_1\pa{\frac{k}{n}}^{1/4} ,
    \end{equation}
    which leads to the claim of the corollary.
\end{proof}

\end{document}